\documentclass{article}

\usepackage[preprint]{neurips_2026}

\usepackage[utf8]{inputenc} 
\usepackage[T1]{fontenc}    
\usepackage{hyperref}       
\usepackage{url}            
\usepackage{booktabs}       
\usepackage{amsfonts}       
\usepackage{nicefrac}       
\usepackage{microtype}      
\usepackage{xcolor}         

\usepackage{bm}
\usepackage{amsmath}
\usepackage{amsthm}
\usepackage{algorithm}
\usepackage{algorithmic}
\usepackage{multirow}
\usepackage{graphicx}
\usepackage{wrapfig}
\usepackage{arydshln} 
\usepackage{makecell}
\usepackage{subfig}
\newtheorem{proposition}{Proposition}
\usepackage{amssymb}

\title{Spatial Gram Alignment for Ultra-High-Resolution Image Synthesis}

\author{%
  Jinjin Zhang\textsuperscript{1}, Xiefan Guo\textsuperscript{1}, Di Huang\textsuperscript{1} \\
  \textsuperscript{1}Beihang University \\
}

\begin{document}

\maketitle

\begin{abstract}
  Modern ultra-high-resolution image synthesis relies heavily on the robust generative capacity of large-scale pre-trained Latent Diffusion Models (LDMs).
  While recent representation alignment methods have proven effective by distilling visual priors from foundation models (\emph{e.g.}, SAM or DINO) into generative latent features, scaling these approaches to pre-trained LDMs at extreme resolutions exposes a critical \emph{learnability-fidelity conflict}.
  Specifically, forcing direct patch-wise feature distillation inherently perturbs the pre-trained latent manifold, ultimately leading to generation degradation.
  To address this bottleneck, we propose Spatial Gram Alignment (SGA), a novel framework that explicitly leverages the representation priors of vision foundation models while preserving the native generative capacity of LDMs.
  Moving beyond restrictive direct alignment, SGA imposes a non-invasive spatial constraint by aligning the internal self-similarities of the generative features with those of the foundation priors.
  This spatial constraint effectively establishes macroscopic structural coherence, while the native generative objectives retain the microscopic pixel-level fidelity inherent to the original LDMs.
  Notably, this versatile strategy integrates seamlessly across both intermediate diffusion features and VAE latents within pre-trained LDMs.
  Extensive experiments demonstrate that SGA achieves state-of-the-art performance for ultra-high-resolution text-to-image synthesis, yielding an effective reconciliation between global structural integrity and fine-grained visual details. 
  Code is available at \href{https://github.com/zhang0jhon/SGA}{https://github.com/zhang0jhon/SGA}.
\end{abstract}

\section{Introduction}

Latent Diffusion Models (LDMs) have driven remarkable progress in photorealistic high-resolution text-to-image synthesis, as evidenced by milestone architectures such as Imagen~\cite{saharia2022photorealistic, baldridge2024imagen}, DALL·E~\cite{ramesh2022hierarchical, betker2023improving}, Stable Diffusion (SD)~\cite{rombach2022high, esser2024scaling}, and Flux~\cite{Flux:2024:Online}. 
To explicitly leverage deep visual priors for accelerating training convergence and boosting inherent model learnability, recent representation alignment (REPA) approaches have emerged as a powerful paradigm. 
These methods align the generative latent features with the deep semantic spaces of pre-trained vision foundation models, such as DINO~\cite{caron2021emerging, oquab2023dinov2, simeoni2025dinov3} or SAM~\cite{kirillov2023segment, ravi2024sam, carion2025sam}. 
By distilling these visual representation priors into either the intermediate hidden states of diffusion models~\cite{yu2024representation, singh2025matters, leng2025repa} or the latent spaces of Variational AutoEncoders (VAEs)~\cite{yao2025reconstruction, zheng2025diffusion, zhang2025both}, such approaches have demonstrated remarkable effectiveness in enhancing the inherent learnability of generative models.

However, while proven effective when training generative models from scratch on standard benchmarks (\emph{e.g.}, ImageNet~\cite{deng2009imagenet}), extending these REPA approaches to fine-tune large-scale pre-trained LDMs inevitably exposes a critical \emph{learnability-fidelity conflict}, which is particularly amplified in ultra-high-resolution image synthesis. 
As illustrated in Figure~\ref{fig:pca}, PCA visualizations of latent features clearly reveal these distinct properties: vision foundation models encode consistent macroscopic semantic topologies, whereas conventional VAE latents~\cite{kingma2013auto, van2017neural, esser2021taming, esser2024scaling, Flux:2024:Online} preserve microscopic, dense high-frequency information. 
Mechanistically, standard REPA constraints rely on maximizing cross-model patch similarities within projected feature spaces~\cite{yu2024representation, yao2025reconstruction}. 
While these projection heads offer limited mathematical relaxation, this direct cross-model distillation forces the generative latent manifold to homogenize towards the foundation space, thereby compromising the delicate high-frequency variations inherent to the pre-trained LDMs. 
Consequently, the native capacity to synthesize intricate, high-fidelity local details is notably restricted, a limitation that becomes prohibitive at extreme 4K scales. 
As empirically validated in Figure~\ref{fig:degradation}, the direct application of state-of-the-art alignment strategies, such as iREPA~\cite{singh2025matters}, to the Flux model~\cite{Flux:2024:Online} induces noticeable generation degradation, quantitatively evidenced by a marked deterioration in gFID scores~\cite{heusel2017gans} compared to the vanilla fine-tuning baseline. 
These findings explicitly illustrate the \emph{learnability-fidelity conflict}: when subjected to strict feature distillation, LDMs struggle to simultaneously maintain global structural coherence and synthesize fine-grained local details, underscoring the critical need to effectively reconcile the inherent tension between representation learnability and native generative fidelity.

\begin{figure}
  \centering
  \centering 
    \subfloat[Feature Visualizations.]{
    \centering
    \includegraphics[width=0.63\linewidth]{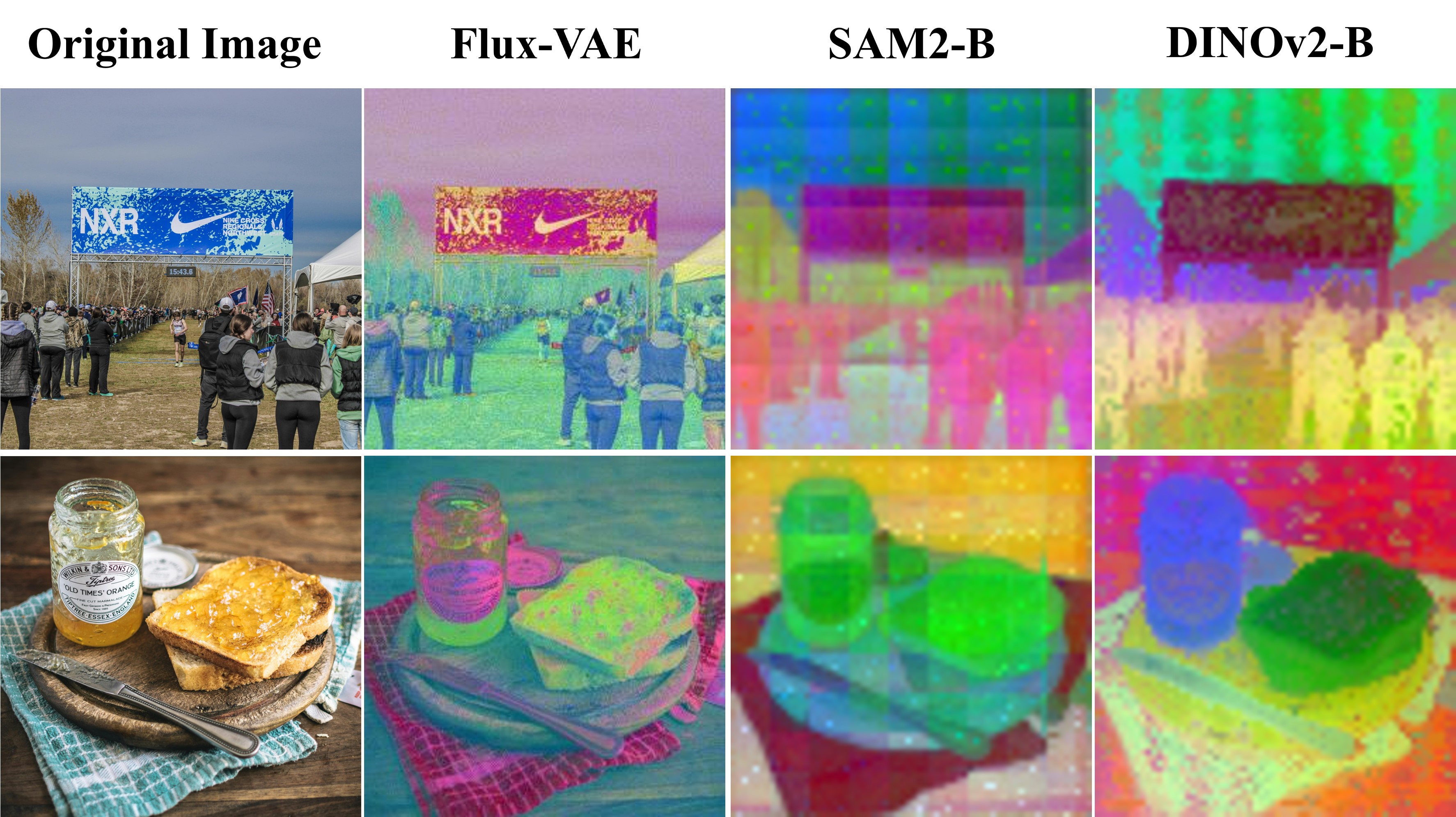}
    \label{fig:pca}
    }
    \hfill
    \subfloat[Fine-tuning w/ and w/o iREPA.]{
    \centering
    \includegraphics[width=0.33\linewidth]{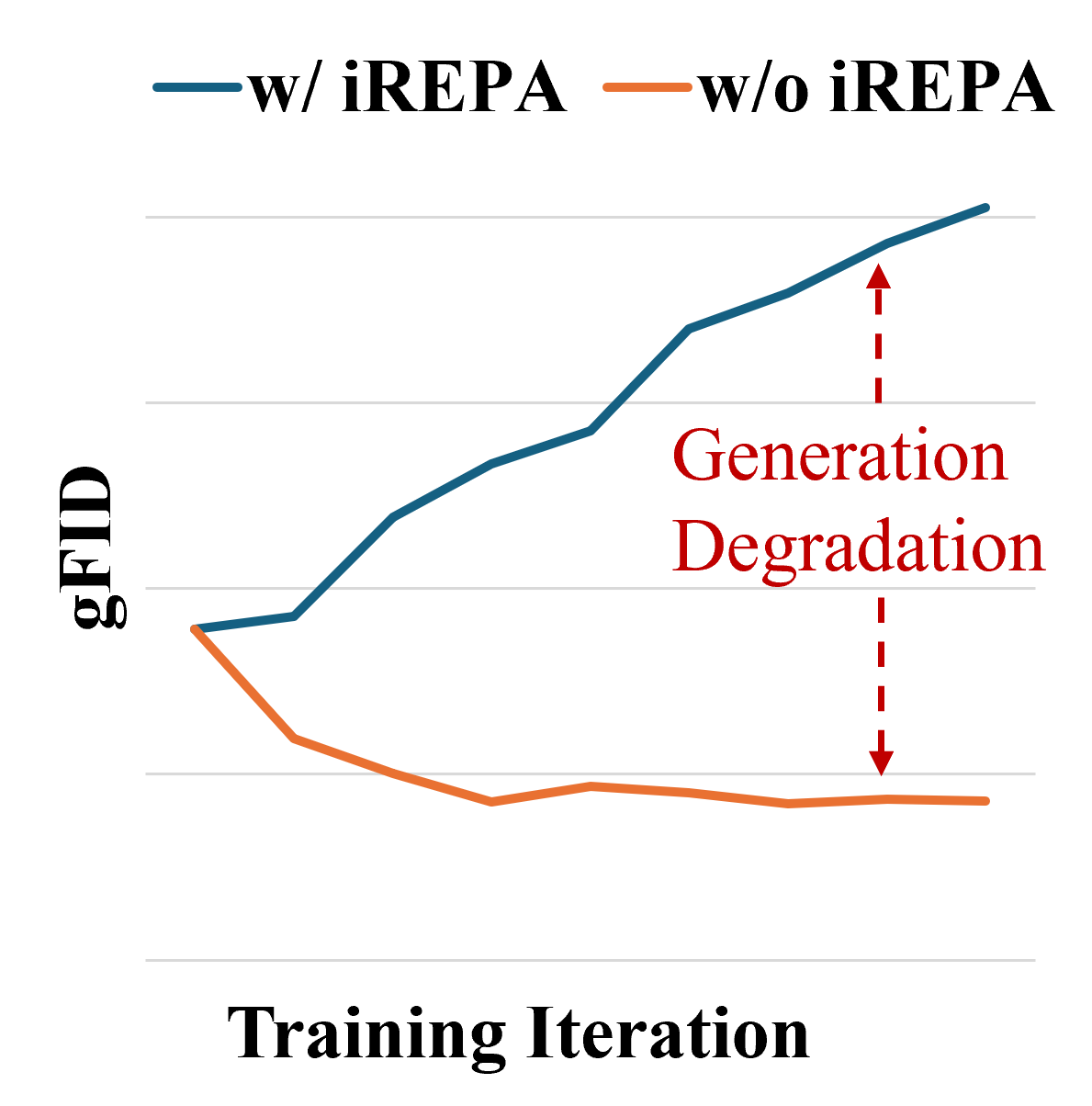}
    \label{fig:degradation}
    }
  \caption{\textbf{Analysis of the \emph{Learnability-Fidelity Conflict}.} (a) PCA feature visualizations reveal distinct representation properties: vision foundation models encode macroscopic semantic topologies, whereas native generative latents preserve microscopic high-frequency details. (b) Directly applying rigid representation alignment (\emph{i.e.}, iREPA~\cite{singh2025matters}) to the Flux model~\cite{Flux:2024:Online} rigidly homogenizes these representation spaces, inducing \emph{generation degradation} at 4K resolution.
  }
  \label{fig:analysis}
\end{figure}

In this paper, we propose Spatial Gram Alignment (SGA), a novel framework designed to combine the distinct advantages of both paradigms, harnessing the representation learnability of foundation models while preserving the fidelity of pre-trained LDMs. 
Instead of enforcing restrictive patch-wise feature distillation that perturbs the pre-trained generative manifold, our approach imposes a non-invasive spatial structural constraint by aligning the internal self-similarities of the generative features with those of the vision foundation models. 
This Gram-based formulation captures essential relative structural relationships while remaining agnostic to the absolute channel basis of the generative features (see Appendix~\ref{appendix:theory}). 
By employing this spatial constraint to establish global structural coherence, while allowing the native generative objectives to safeguard intricate pixel-level fidelity, our framework effectively reconciles representation learnability with uncompromised generative capacity. 
Notably, our method is highly versatile and integrates seamlessly into both VAE latent spaces and intermediate diffusion features, consistently enhancing structural coherence without degrading the pre-trained manifolds of large-scale LDMs. 
Extensive experiments demonstrate the effectiveness of our approach, advancing the current state-of-the-art in ultra-high-resolution image synthesis.

In summary, our main contributions are three-fold:
\begin{itemize}
\item We identify and investigate the critical \emph{learnability-fidelity conflict} when integrating representation priors into large-scale pre-trained LDMs for ultra-high-resolution (\emph{e.g.}, 4K) image synthesis, revealing that strict patch-wise feature homogenization inherently perturbs the native pre-trained manifolds, inevitably inducing generation degradation.
\item We propose SGA, a novel and non-invasive representation alignment framework. Rather than enforcing direct cross-model feature distillation, SGA aligns internal spatial self-similarities to establish macroscopic structural coherence while safeguarding the inherent microscopic generative capacity of LDMs.
\item We demonstrate the versatility of SGA by seamlessly integrating it into both the VAE and the diffusion model. Extensive experiments validate that our approach achieves leading performance for ultra-high-resolution text-to-image synthesis, yielding a superior reconciliation between global representation learnability and fine-grained visual fidelity.
\end{itemize}

\section{Related Work}

\subsection{Ultra-High-Resolution Image Synthesis}

Since training ultra-high-resolution models from scratch remains computationally prohibitive, modern 4K image synthesis pipelines heavily rely on leveraging the rich textual-visual priors of large-scale pre-trained LDMs. 
While recent foundation models in this class, such as SD3/3.5~\cite{esser2024scaling} and Flux~\cite{Flux:2024:Online}, have achieved notable success in standard high-resolution text-to-image synthesis, scaling them to ultra-high-resolution (\emph{e.g.}, 4K) regimes poses significant challenges.
Beyond the substantial quadratic computational overhead, simultaneously synthesizing coherent macroscopic structures and intricate microscopic details remains highly challenging~\cite{zhang2025both, ren2024ultrapixel, zhao2025ultrahr}.

To tackle these extreme scales, various strategies have been proposed. UltraPixel~\cite{ren2024ultrapixel} leverages cascaded diffusion models to progressively synthesize images with rich details, effectively advancing ultra-high-resolution generation.
SANA~\cite{xie2024sana} achieves efficient 4K synthesis by integrating linear transformers with deep compressed autoencoders~\cite{chen2024deep}, substantially accelerating generation while maintaining text-image alignment.
Diffusion-4K~\cite{zhang2025diffusion} introduces Wavelet-based Latent Fine-tuning (WLF), extending the capabilities of large-scale pre-trained LDMs to the 4K domain.
Furthermore, UltraImage~\cite{zhao2025ultraimage} employs recursive dominant-frequency correction to mitigate repetitive artifacts, alongside an entropy-guided adaptive attention mechanism to recover sharpness lost during resolution extrapolation, facilitating high-fidelity generation at extreme scales.

\subsection{Representation Alignment for Generative Models}

Proven effective in improving spatial structure, representation alignment injects deep semantic priors from vision foundation models into generative latent features to enhance representation learnability~\cite{yu2024representation, singh2025matters}.
Existing literature can be broadly categorized into two main trajectories: aligning intermediate diffusion features and enriching autoencoder latent spaces.

\textbf{Alignment within Diffusion Models.} 
Within the context of denoising networks, REPA~\cite{yu2024representation} pioneers the cross-model alignment of noisy intermediate hidden states with clean, robust image representations extracted from external pre-trained visual encoders.
Building upon this, iREPA~\cite{singh2025matters} highlights the critical role of spatial structures in this distillation process.
By emphasizing the transfer of spatial information, iREPA demonstrates accelerated convergence and strong adaptability across diverse architectures, including latent-space (DiT~\cite{peebles2023scalable}, SiT~\cite{ma2024sit}) and pixel-space (JiT~\cite{li2025back}) diffusion models.
Furthermore, REPA-E~\cite{leng2025repa} extends this paradigm by enabling joint tuning of both the VAE and the diffusion model under alignment constraints, yielding significant performance gains.

\textbf{Alignment within Autoencoder Latent Spaces.} 
Parallel to diffusion-centric methods, alignment strategies have been increasingly applied to the latent spaces of autoencoders~\cite{yao2025reconstruction, zheng2025diffusion, tong2026scaling}.
VA-VAE~\cite{yao2025reconstruction} explicitly aligns the high-dimensional latent space of the visual tokenizer with foundation models, effectively advancing the reconstruction-generation frontier of LDMs.
Similarly, Representation Autoencoders (RAEs)~\cite{zheng2025diffusion} introduce a novel class of autoencoders that replace the conventional VAE with a pre-trained representation encoder paired with a trainable decoder, thereby directly linking deep semantic understanding with generative modeling.
Scale-RAE~\cite{tong2026scaling} further investigates the scaling laws of RAEs for text-to-image synthesis, demonstrating substantial potential in both scalable generation and unified multi-modal modeling.

Despite these pioneering efforts, directly applying REPA approaches to large-scale pre-trained LDMs inevitably induces generation degradation, exposing a critical \emph{learnability-fidelity conflict} that remains largely underexplored in ultra-high-resolution image synthesis.
Consequently, how to effectively synergize the macroscopic structural advantages of deep representation priors with the native, high-frequency generative capacity of pre-trained LDMs remains a pivotal open question.

\section{Methodology}

In this section, we elaborate on the proposed Spatial Gram Alignment (SGA) framework, systematically designed to harmonize the deep representation priors of vision foundation models with the native generative capacity of large-scale pre-trained LDMs.
We unfold our methodology as follows.
First, we establish the mathematical notations and preliminaries for large-scale pre-trained LDMs in Section~\ref{subsec:preliminaries}.
Second, we present the core formulation of SGA, detailing its role as a non-invasive structural constraint in Section~\ref{subsec:sga}.
Finally, we detail the overall optimization framework in Section~\ref{subsec:overall_framework}, illustrating the seamless integration strategies within pre-trained LDM pipelines.

\subsection{Preliminaries}
\label{subsec:preliminaries}

Large-scale pre-trained LDMs~\cite{rombach2022high} typically decouple the image generation process into two distinct stages: perceptual compression with VAEs and latent generative modeling. 

Given an image $x \in \mathbb{R}^{H \times W \times 3}$, a pre-trained VAE encoder $\mathcal{E}$ first compresses the image into a dense, lower-dimensional latent representation $z = \mathcal{E}(x) \in \mathbb{R}^{h \times w \times c}$. Conversely, the decoder $\mathcal{D}$ maps the latent code back to the pixel space, yielding the reconstructed image $\hat{x} = \mathcal{D}(z)$.

Within this latent space, recent advancements such as SD3~\cite{esser2024scaling} and Flux~\cite{Flux:2024:Online} have transitioned from standard diffusion formulations~\cite{ho2020denoising, nichol2021improved} to flow matching~\cite{lipman2022flow, albergo2022building}, specifically rectified flow~\cite{liu2022flow}, to construct the optimal transport path between the noise and data distributions. 
Formally, rectified flow defines a linear interpolation path connecting a Gaussian noise variable $z_0 \sim \mathcal{N}(0, \mathbf{I})$ and a target data variable $z_1$ (\emph{i.e.}, the encoded latent $z_1 = z \sim q(z)$). 
At any time $t \in [0, 1]$, the intermediate state is constructed as: 
\begin{equation}
    z_t = t z_1 + (1 - t) z_0.
    \label{equation: interpolate}
\end{equation}

The generative process is governed by an Ordinary Differential Equation (ODE), $dz_t = v(z_t, t) dt$, where the target vector field (\emph{i.e.}, velocity) is defined as the constant linear trajectory $v(z_t, t) = z_1 - z_0$. 
A neural network $v_\theta$, typically parameterized by a Multimodal Diffusion Transformer (MMDiT) architecture~\cite{esser2024scaling}, is trained to predict this vector field conditioned on external signals $c$ (\emph{e.g.}, text embeddings). 
The model is optimized via conditional vector field regression to minimize the flow matching objective:
\begin{equation}
    \mathcal{L}_{fm} = \mathbb{E}_{z_1 \sim q(z), z_0 \sim \mathcal{N}(0,\mathbf{I}), c, t \sim \mathcal{U}(0,1)} \left[ \| v_\theta(z_t, t, c) - (z_1 - z_0) \|_2^2 \right].
    \label{equation:flow_matching}
\end{equation}

\subsection{Spatial Gram Alignment}
\label{subsec:sga}

To effectively inject representation priors into pre-trained LDMs without compromising the pixel-level fidelity, we propose Spatial Gram Alignment (SGA).
Formally, let $f(\cdot)$ denote the frozen vision foundation model and $g(\cdot)$ denote the target generative module (\emph{e.g.}, the VAE encoder or the diffusion network). 
Given a clean image $x$, the foundation model extracts deep semantic representations directly from the pixels, yielding $H_f = f(x) \in \mathbb{R}^{N \times C_f}$. 
Concurrently, $g(\cdot)$ produces the corresponding generative feature maps $H_g \in \mathbb{R}^{N \times C_g}$. Depending on the specific alignment target within the text-to-image synthesis pipeline, $H_g$ represents either the encoded VAE latent~\cite{yao2025reconstruction} or the intermediate hidden states extracted from the denoising network processing the noisy latent $z_t$~\cite{yu2024representation}, conditioned on timestep $t$ and text prompt $c$. 
Here, $C_f$ and $C_g$ denote their respective channel dimensions, while $N = h \times w$ represents a shared macroscopic sequence length. 
In practice, to reconcile any inherent spatial mismatch, feature maps are explicitly downsampled via adaptive average pooling to this unified length $N$ prior to further alignment.

Specifically, we first map the generative features into a shared feature space via a projection head $\phi(\cdot)$. 
We then apply $L_2$ normalization along the channel axis for both the projected generative features and the foundation priors, yielding $\tilde{H}_g = \text{Norm}(\phi(H_g)) \in \mathbb{R}^{N \times C_f}$ and $\tilde{H}_f = \text{Norm}(H_f) \in \mathbb{R}^{N \times C_f}$. 
Subsequently, we construct the spatial Gram matrices $G_g, G_f \in \mathbb{R}^{N \times N}$ via straightforward matrix multiplication, which encapsulates the dense structural correlations among all spatial patches:
\begin{equation}
  G_g = \tilde{H}_g \tilde{H}_g^\top, \quad G_f = \tilde{H}_f \tilde{H}_f^\top.
  \label{equation:gram_computation}
\end{equation}

Crucially, aligning this spatial Gram matrix rather than the absolute features imposes a relative constraint over feature topology, while remaining invariant to orthogonal transformations of the projected generative feature basis (see Appendix~\ref{appendix:theory}).
Provided the inherent structure of self-similarities remains consistent, the projected generative representations retain substantially more degrees of freedom than under direct patch matching, effectively bypassing disruptive cross-model feature-coordinate homogenization.
Formally, we instantiate this non-invasive topological constraint by penalizing the structural divergence between the two spatial Gram matrices. 
This objective is directly optimized by minimizing their scaled squared Frobenius norm, which explicitly aligns the dense, pair-wise representation topologies:
\begin{equation}
  \mathcal{L}_{sga} 
  = \mathbb{E}_{x, z_0, c, t} \left[ \frac{1}{N^2} \| G_g - G_f \|_F^2 \right] 
  = \mathbb{E}_{x, z_0, c, t} \left[ \frac{1}{N^2} \sum_{i=1}^{N} \sum_{j=1}^{N} \Big( (G_g)_{i,j} - (G_f)_{i,j} \Big)^2 \right].
  \label{equation:sga_loss}
\end{equation}
In essence, rather than forcing the generative network to rigidly mimic the foundation model's feature space, Equation~\ref{equation:sga_loss} acts as a spatial structural distillation objective.
While recent works have explored empirical heuristics in orthogonal domains, such as intra-model anchoring for SSL pre-training~\cite{simeoni2025dinov3} or relational alignment in video fine-tuning~\cite{zhang2025videorepa}, these works neither identify the critical \emph{learnability-fidelity conflict} nor provide a formal account of how relational matching avoids direct feature-coordinate matching within pre-trained LDMs.
In contrast, Appendix~\ref{appendix:theory} establishes the three key properties of $\mathcal{L}_{sga}$: channel-orthogonal gauge invariance, spectral and spatial subspace matching, and zero-loss containment, which provide the theoretical basis for our precise non-invasive claim and justify the unified two-stage integration presented in Section~\ref{subsec:overall_framework}.

\subsection{Optimization Framework}
\label{subsec:overall_framework}

We integrate SGA into the LDM pipeline through a unified two-component template:
\begin{equation}
    \mathcal{L}_{\text{stage}} \;=\; \underbrace{\mathcal{L}_{\text{native}}}_{\text{stage-specific native objective}} \;+\; \lambda_s \cdot \underbrace{\mathcal{L}_{sga}\big(\phi_{\text{stage}}(H_g),\, f(x)\big)}_{\text{injects foundation prior}},
    \label{equation:unified}
\end{equation}
where $\mathcal{L}_{\text{native}}$ is the stage's native training objective and $\mathcal{L}_{sga}$ injects macroscopic structural priors from the foundation model. 
The two stages of the LDM pipeline, including VAE perceptual compression and latent generative modeling, are obtained as concrete instantiations of Eq.~\ref{equation:unified}.

\textbf{VAE stage.} Here $\mathcal{L}_{\text{native}} = \mathcal{L}_{\text{vanilla}} + \lambda_m \mathcal{L}_m$ combines the standard VAE objective with our proposed moment alignment loss anchoring the encoder to the pre-trained latent statistics, resulting in:
\begin{equation}
    \mathcal{L}_{vae} = \mathcal{L}_{vanilla}(x, \hat{x}) + \lambda_{m} \cdot \mathcal{L}_{m}(\mathcal{E}(x), \mathcal{E}^\ast(x)) + \lambda_{s} \cdot \frac{ \| \nabla_{\mathcal{E}^{L_{\mathcal{E}}}}[\mathcal{L}_{m}] \|_2 }{ \| \nabla_{\mathcal{E}^{L_{\mathcal{E}}}}[\mathcal{L}_{sga}] \|_2} \mathcal{L}_{sga}(\phi_{vae}(z), f(x)), 
    \label{equation:vae}
\end{equation}
where $\nabla_{\mathcal{E}^{L_{\mathcal{E}}}}[\cdot]$ denotes the clamped gradient of the respective loss term \emph{w.r.t.} the last layer $L_{\mathcal{E}}$ of the encoder $\mathcal{E}$~\cite{esser2021taming}.
Furthermore, $\mathcal{L}_{vanilla}$ represents the standard composite objective crucial for high-fidelity optimization~\cite{esser2021taming, rombach2022high}, comprising a pixel-wise reconstruction loss $\mathcal{L}_{rec}$, a perceptual penalty $\mathcal{L}_{lpips}$~\cite{zhang2018unreasonable}, and a patch-based adversarial loss $\mathcal{L}_{adv}$ parameterized by a discriminator $\psi_{adv}$~\cite{isola2017image}:
\begin{equation}
\begin{aligned}
    \mathcal{L}_{vanilla} = \mathop{\min}\limits_{\mathcal{D}, \mathcal{E}} \mathop{\max}\limits_{\psi_{adv}} \bigg[  \mathcal{L}_{rec}(x, \mathcal{D}(\mathcal{E}(x))) + \lambda_{lpips} \cdot \mathcal{L}_{lpips}(x, \mathcal{D}(\mathcal{E}(x))) & \\
    - \lambda_{adv} \cdot \frac{\| \nabla_{\mathcal{D}^{L_{\mathcal{D}}}}[\mathcal{L}_{lpips}] \|_2 }{ \| \nabla_{\mathcal{D}^{L_{\mathcal{D}}}}[\mathcal{L}_{adv}] \|_2 } \mathcal{L}_{adv}(x, \psi_{adv}(\mathcal{D}(\mathcal{E}(x))))   \bigg],
    \label{equation:vanilla_vae}
\end{aligned}
\end{equation}
where $\nabla_{\mathcal{D}^{L_{\mathcal{D}}}}[\cdot]$ similarly denotes the value-clamped gradient \emph{w.r.t.} the last layer ${L_{\mathcal{D}}}$ of the decoder $\mathcal{D}$. 
The moment alignment term $\mathcal{L}_{m} = \mathbb{E}_{x} \left[ \| \mu(x) - \mu^\ast(x) \|^2_2 + \| \log\sigma^2(x) - \log\sigma^{\ast 2}(x) \|^2_2 \right]$ anchors the latent mean $\mu$ and log-variance $\log\sigma^2$ to the pre-trained latent space, maintaining compatibility with the frozen LDM.
Coupled with scale consistency regularization~\cite{zhang2025both}, this alignment facilitates deeper compression while preserving the inherent statistical characteristics of the original latent manifold, thereby maintaining compatibility with the pre-trained LDM.

\textbf{Diffusion stage.} Here, $\mathcal{L}_{\text{native}}$ serves as the standard conditional diffusion objective (\emph{e.g.}, flow matching~\cite{esser2024scaling, liu2022flow} or WLF~\cite{zhang2025diffusion}). 
Because the preceding VAE stage explicitly anchors the representations within the pre-trained latent space, the generative trajectory of the LDM is implicitly preserved by initializing from the pre-trained diffusion weights $\theta^\ast$, where $\mathcal{L}_{fm}$ is already near-optimal. 
Following the established representation alignment paradigm~\cite{yu2024representation, singh2025matters}, the training objective is formulated as a direct instantiation of Eq.~\ref{equation:unified} on the intermediate hidden states $H_g$ of the denoising network:
\begin{equation}
    \mathcal{L}_{diff} = \mathcal{L}_{fm}(\theta) + \lambda_{s} \cdot \mathcal{L}_{sga}(\phi_{diff}(H_g), f(x)).
    \label{equation:diffusion}
\end{equation}
By integrating the proposed $\mathcal{L}_{sga}$, our framework seamlessly injects deep structural priors from the foundation model while effectively circumventing the generation degradation commonly induced by direct alignment paradigms. Consequently, the diffusion network is empowered to synthesize coherent global compositions without sacrificing its inherent high-frequency generative fidelity.

The complete two-stage training procedure is summarized in Algorithm~\ref{alg:sga_training}.

\begin{algorithm}[tb]
\renewcommand{\algorithmicrequire}{\textbf{Input:}} 
\renewcommand{\algorithmicensure}{\textbf{Output:}}
\caption{Training Framework of Spatial Gram Alignment (SGA).}
\label{alg:sga_training}
\begin{algorithmic}[1]
\REQUIRE 
Training dataset $\mathcal{X}$; vision foundation model $f$; patch discriminator $\psi_{adv}$; pre-trained VAE $\{\mathcal{E}^\ast, \mathcal{D}^\ast\}$ and diffusion network $\theta^\ast$; total iterations $T_{vae}$ and $T_{diff}$; loss weights $\lambda_m$ and $\lambda_{s}$. 
\ENSURE
The fine-tuned VAE $\{\mathcal{E}, \mathcal{D}\}$, the diffusion network $\theta$, and the spatial projectors $\{\phi_{vae}, \phi_{diff}\}$.

\STATE \textcolor{gray}{// \textit{Stage 1: VAE Fine-tuning for Latent Compression}}
\STATE Initialize VAE $\{\mathcal{E}, \mathcal{D}\}$ with pre-trained $\{\mathcal{E}^\ast, \mathcal{D}^\ast\}$, patch discriminator $\psi_{adv}$, and projector $\phi_{vae}$. Freeze foundation model $f$.
\FOR{$i = 1, 2, ..., T_{vae}$} 
    \STATE Sample a batch of high-resolution images $x \sim \mathcal{X}$.
    \STATE Extract representation priors $H_f = f(x)$ and reference latent statistics $z^\ast = \mathcal{E}^\ast(x)$.
    \STATE Compute VAE latents $z = \mathcal{E}(x)$, reconstructions $\hat{x} = \mathcal{D}(z)$, and adversarial objective $\mathcal{L}_{adv}$.
    \STATE $\psi_{adv} \gets \mathbf{MODELUPDATE} (\psi_{adv}, \nabla \mathcal{L}_{adv})$. \hfill 
    \STATE $\mathcal{L}_{vae} \gets \mathcal{L}_{vanilla} + \lambda_{m} \cdot \mathcal{L}_{m} + \lambda_{s} \cdot \frac{ \| \nabla_{\mathcal{E}^{L_{\mathcal{E}}}}[\mathcal{L}_{m}] \|_2 }{ \| \nabla_{\mathcal{E}^{L_{\mathcal{E}}}}[\mathcal{L}_{sga}] \|_2} \mathcal{L}_{sga}$ \hfill (Eq.~\ref{equation:vae}).
    \STATE $\{\mathcal{E}, \mathcal{D}, \phi_{vae}\} \gets \mathbf{MODELUPDATE} (\{\mathcal{E}, \mathcal{D}, \phi_{vae}\}, \nabla \mathcal{L}_{vae})$. 
\ENDFOR

\STATE \textcolor{gray}{// \textit{Stage 2: Generative Denoising Stage}}
\STATE Freeze the fine-tuned VAE $\{\mathcal{E}, \mathcal{D}\}$. 
\STATE Initialize diffusion network $\theta$ with pre-trained $\theta^\ast$, and spatial projector $\phi_{diff}$.
\FOR{$i = 1, 2, ..., T_{diff}$}
    \STATE Sample images $x \sim \mathcal{X}$ and the corresponding text prompts $c$.
    \STATE Extract representation priors $H_f = f(x)$ and compress latents $z = \mathcal{E}(x)$.
    \STATE Sample timestep $t$, compute noisy latent $z_t$, and obtain intermediate states $H_g$ from $\theta$.
    \STATE $\mathcal{L}_{diff} \gets \mathcal{L}_{fm} + \lambda_{s} \cdot \mathcal{L}_{sga}$ \hfill (Eq.~\ref{equation:diffusion}).
    \STATE $\{\theta, \phi_{diff}\} \gets \mathbf{MODELUPDATE} (\{\theta, \phi_{diff}\}, \nabla \mathcal{L}_{diff})$.
\ENDFOR

\end{algorithmic}
\end{algorithm}

\section{Experiments}

In this section, we present the implementation details of our algorithm and evaluate its performance through both quantitative metrics and qualitative visual comparisons.
Our results demonstrate the effectiveness of integrating our proposed approach with state-of-the-art LDMs for ultra-high-resolution text-to-image synthesis.

\subsection{Implementation Details}

We adopt Flux.1-dev~\cite{Flux:2024:Online} as our default LDM owing to its strong capabilities in text-to-image synthesis.
To demonstrate the generalizability of our framework, we employ SAM2-B/32~\cite{ravi2024sam}, DINOv2-B/14~\cite{oquab2023dinov2}, and DINOv3-B/16~\cite{simeoni2025dinov3} as the vision foundation models for extracting spatial representation priors.
In practice, a convolutional layer followed by an adaptive average pooling is utilized as the projection head $\phi(\cdot)$ to map the generative features into the shared feature space. 

For the optimization of the pre-trained VAE, we curate a massive high-resolution dataset comprising over 12 million images sourced from SA-1B~\cite{kirillov2023segment}, FFHQ~\cite{karras2019style}, and Mapillary Vistas~\cite{neuhold2017mapillary}, \emph{etc}.
We fine-tune the Flux VAE at a $1024 \times 1024$ resolution for 2 epochs on 16 NVIDIA H100 GPUs, with a total batch size of 160.
The loss weights $\lambda_m$, $\lambda_s$, $\lambda_{lpips}$ and $\lambda_{adv}$ are empirically set to $1.0$, $1.0$, $0.1$ and $0.05$, respectively.
The model is optimized using the AdamW optimizer~\cite{loshchilov2017decoupled} with a learning rate of $1e-5$ and a weight decay of $1e-4$.
Crucially, to directly accommodate ultra-high-resolution synthesis, we explicitly incorporate scale consistency regularization~\cite{zhang2025both} to push the VAE towards a deeper $16 \times$ spatial compression rate.

During the latent generative stage, we fine-tune the 12B Flux diffusion network on the Aesthetic-Train~\cite{zhang2025diffusion}, which consists of 12,015 carefully curated high-quality training images.
The training spans 20K iterations on 8 NVIDIA H100 GPUs with a total batch size of 32, heavily accelerated by DeepSpeed ZeRO~\cite{rajbhandari2020zero}.
To effectively synthesize 4K images, we incorporate wavelet-based latent fine-tuning~\cite{zhang2025diffusion} to preserve intricate high-frequency details. 
We couple this technique with aspect-ratio bucket training up to a $4096$ long-edge resolution. 
This naturally bypasses destructive center-cropping, thereby preserving the intrinsic visual characteristics of the original images, while simultaneously employing a logit-normal timestep sampler~\cite{esser2024scaling}.
The AdamW optimizer is employed with a learning rate of $1e-6$ and a weight decay of $1e-4$.
During the inference phase, all images are generated utilizing an Euler solver with 50 sampling steps, and the guidance scale is set to 7.0 and 5.0 for 2K and 4K image generation, respectively.
More details are provided in the Appendix.

\begin{table}
  \caption{Quantitative comparisons on Aesthetic-Eval benchmark, including both Aesthetic-Eval@2K and Aesthetic-Eval@4K at 2K and 4K scales respectively.}
  \label{table:main_results}
  \centering
  \resizebox{\columnwidth}{!}{
  \begin{tabular}{lcccccc}
    \toprule
    \multirow{2}{*}{Model} & \multirow{2}{*}{Evaluation Dataset} & \multicolumn{3}{c}{Holistic Measures} & \multicolumn{2}{c}{Local Measures} \\
    \cmidrule(lr){3-5} \cmidrule(lr){6-7}
    & & gFID $\downarrow$ & CLIP Score $\uparrow$ & Aesthetics $\uparrow$ & GLCM Score $\uparrow$ & Compression Ratio $\downarrow$ \\
    \midrule
    Flux~\cite{Flux:2024:Online} & \multirow{5}{*}{Aesthetic-Eval@2K}  & 50.57 & 30.41 & 6.36 & 0.58 & 14.80  \\
    Flux-WLF (Diffusion-4K)~\cite{zhang2025diffusion} &  & 39.49 & 34.41 & 6.37 & 0.61 & 13.60  \\
    Flux-SGA-SAM2 (Ours) &  & 38.57 & 34.46 & 6.38 & 0.79 & 9.98   \\
    Flux-SGA-DINOv2 (Ours) &   & 39.75 & 34.47 & 6.38 & 0.81 & 10.06 \\
    Flux-SGA-DINOv3 (Ours) &   & 39.39 & 34.53 & 6.39 & 0.79 & 10.49 \\
    \midrule
    Flux~\cite{Flux:2024:Online} & \multirow{5}{*}{Aesthetic-Eval@4K}  & 154.96 & 30.76 & 6.02 & 0.38 & 18.83  \\
    Flux-WLF (Diffusion-4K)~\cite{zhang2025diffusion} &  & 151.95 & 33.12 & 6.08 & 0.39 & 18.69  \\
    Flux-SGA-SAM2 (Ours) &  & 148.30 & 33.46 & 6.10 & 0.40 & 16.11   \\
    Flux-SGA-DINOv2 (Ours) &   & 146.41 & 33.60 & 6.15 & 0.47 & 15.56 \\
    Flux-SGA-DINOv3 (Ours) &   & 146.33 & 33.61 & 6.17 & 0.43 & 17.33 \\
    \bottomrule
  \end{tabular}
  }
\end{table}

\subsection{Main Results}

\textbf{Quantitative Evaluation.}
Following the standard evaluation protocol established in Diffusion-4K~\cite{zhang2025diffusion}, we assess the generation quality across macroscopic holistic metrics, including gFID~\cite{heusel2017gans}, CLIP Score~\cite{hessel2021clipscore}, and Aesthetics Score~\cite{schuhmann2022laion}, as well as microscopic local metrics, specifically GLCM Score and Compression Ratio~\cite{zhang2025diffusion}.
As summarized in Table~\ref{table:main_results}, we comprehensively evaluate our framework on the Aesthetic-Eval benchmark, which comprises 2,781 and 195 images for robust testing at 2K (Aesthetic-Eval@2K) and 4K (Aesthetic-Eval@4K) resolutions, respectively.
Experimental results show that our approach achieves clear performance gains across this broad spectrum of evaluative dimensions, reconciling global semantic coherence with local high-frequency fidelity.
Furthermore, these performance gains across most evaluative metrics are consistently maintained regardless of the choice of vision foundation models including SAM2~\cite{ravi2024sam}, DINOv2~\cite{oquab2023dinov2}, and DINOv3~\cite{simeoni2025dinov3}, highlighting the robust generalizability of the proposed SGA framework.

\textbf{Qualitative Visualizations.}
Beyond quantitative metrics, we provide rigorous qualitative comparisons against the strong Diffusion-4K baseline~\cite{zhang2025diffusion} for direct 4K image synthesis, utilizing text prompts sampled from the Aesthetic-Eval@4K benchmark.
As illustrated in Figure~\ref{fig:comparisons}, our SGA framework synthesizes photorealistic 4K images with significantly improved macroscopic structural coherence (\emph{e.g.}, accurate object proportions and logical global layouts) while simultaneously enhancing intricate, high-frequency local details (\emph{e.g.}, high-fidelity surface patterns and sharp edges).
This visual evidence clearly supports our core claim: SGA effectively reconciles the \emph{learnability-fidelity conflict}, enabling superior visual quality at extreme 4K scales.
Extensive qualitative results are provided in the Appendix.

\begin{figure}
  \centering
  \includegraphics[width=\linewidth]{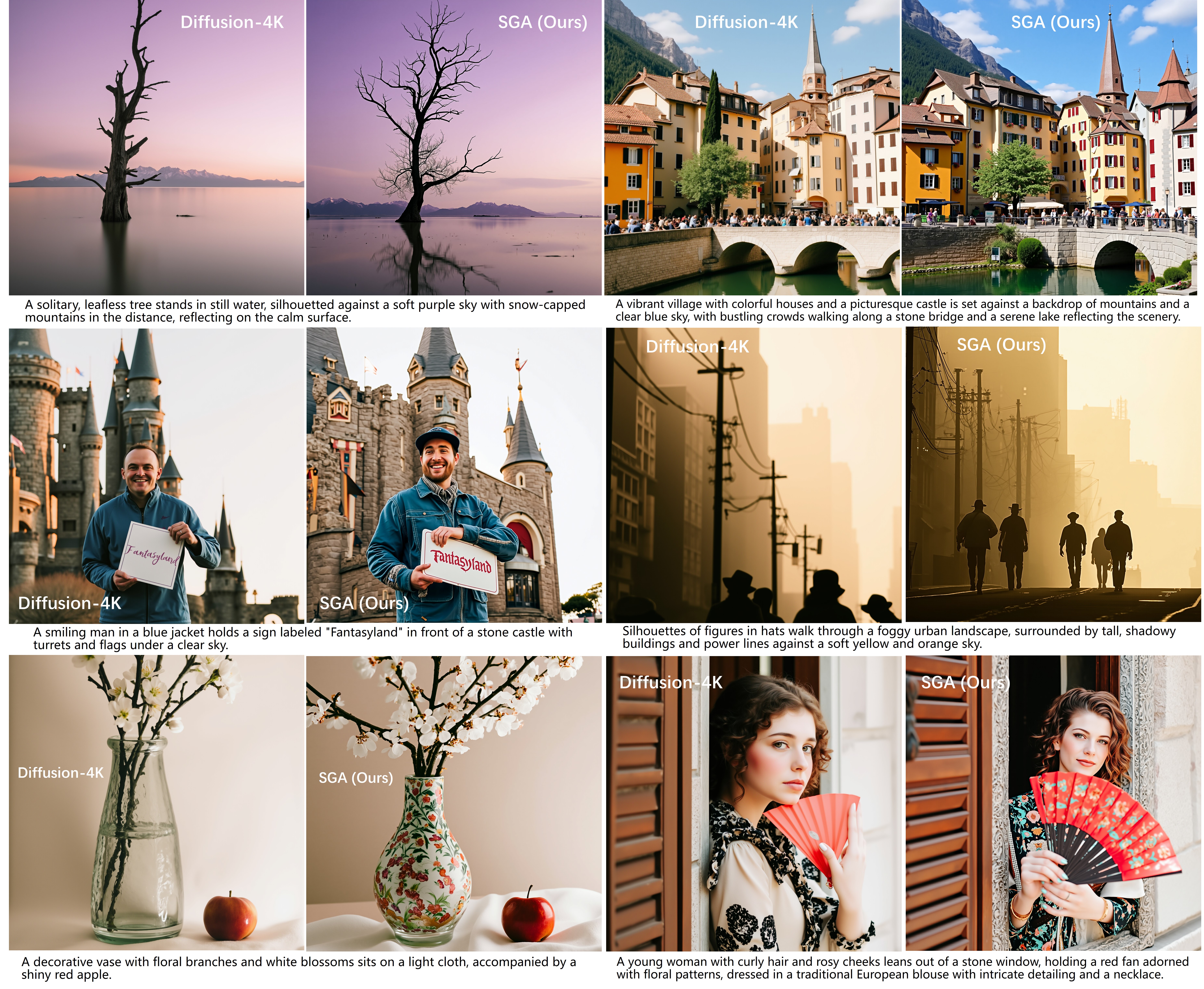}
  \caption{\textbf{Qualitative Comparisons at 4K Resolution.} Compared to the strong Diffusion-4K baseline~\cite{zhang2025diffusion}, our SGA framework achieves superior macroscopic structural coherence and effectively preserves microscopic high-frequency details, enabling superior visual quality. Please zoom in for better visualization. }
  \label{fig:comparisons}
\end{figure}

\textbf{VAE Reconstruction Performance.}
Since preserving pixel-level fidelity fundamentally begins at the latent compression stage, we further evaluate the reconstruction capability of our optimized VAE on the Aesthetic-Train dataset, which contains 12,015 high-quality 4K images and is disjoint from the large-scale corpus used for VAE training.
As depicted in Table~\ref{table:vae_results}, we present quantitative reconstruction comparisons between our optimized VAE and the partitioned off-the-shelf Flux VAE baseline~\cite{zhang2025diffusion} with $16\times$ compression ratio.
To comprehensively assess reconstruction quality, we employ a rigorous suite of metrics, including rFID, Normalized Mean Square Error (NMSE), Peak Signal-to-Noise Ratio (PSNR), Structural Similarity Index Measure (SSIM)~\cite{wang2004image}, and Learned Perceptual Image Patch Similarity (LPIPS)~\cite{zhang2018unreasonable}.
Evaluated on the Aesthetic-Train under an identical compression ratio, the results explicitly confirm that our SGA-enhanced VAE achieves superior high-fidelity reconstruction. 
More comparisons against the vanilla fine-tuning baseline~\cite{rombach2022high} are provided in Appendix~\ref{appendix:more_results}.
Notably, this comprehensive enhancement across all statistical and perceptual metrics (\emph{e.g.}, rFID and LPIPS) serves as strong evidence that our non-invasive alignment effectively preserves delicate high-frequency variations without compromising spatial integrity~\cite{bfl2025representation}. 
By safeguarding the preservation of microscopic details at the latent level, this enhanced reconstruction capability proves to be an indispensable component for high-fidelity ultra-high-resolution synthesis.

\begin{table}
  \caption{Quantitative reconstruction comparisons of VAEs with a downsampling factor of $F = 16$ on Aesthetic-4K benchmark. }
  \label{table:vae_results}
  \centering
  \resizebox{0.9\columnwidth}{!}{
  \begin{tabular}{lcccccc}
    \toprule
    Model & Resolution & rFID $\downarrow$ &  NMSE $\downarrow$ & PSNR $\uparrow$ & SSIM $\uparrow$ & LPIPS $\downarrow$ \\
    \midrule
    Flux-VAE-F16~\cite{zhang2025diffusion} & \multirow{4}{*}{$2048\times 2048$} & 1.95 & 0.10 & 27.54 & 0.77 & 0.17     \\
    Flux-VAE-F16-SGA-SAM2 (Ours)  & & 0.06 & 0.06 & 32.01 & 0.85 & 0.07   \\ 
    Flux-VAE-F16-SGA-DINOv2 (Ours) & & 0.08 & 0.07 & 31.58 & 0.84 & 0.08 \\ 
    Flux-VAE-F16-SGA-DINOv3 (Ours) & & 0.08 & 0.06 & 32.19 & 0.86 & 0.09 \\ 
    \midrule
    Flux-VAE-F16~\cite{zhang2025diffusion} & \multirow{4}{*}{$4096\times 4096$} & 1.69 & 0.08 & 29.22 & 0.79 & 0.16     \\
    Flux-VAE-F16-SGA-SAM2 (Ours) & & 0.38 & 0.06 & 33.59 & 0.85 & 0.09   \\ 
    Flux-VAE-F16-SGA-DINOv2 (Ours) & & 0.39 & 0.06 & 33.41 & 0.85 & 0.09 \\ 
    Flux-VAE-F16-SGA-DINOv3 (Ours) & & 0.44 & 0.05 & 33.99 & 0.87 & 0.09 \\ 
    \bottomrule
  \end{tabular}
  }
\end{table}

\subsection{Ablation Studies}

\textbf{Ablation on alignment strategy.}
To directly validate the central motivation of our work, we compare SGA against a representative direct patch-wise alignment method, namely iREPA~\cite{singh2025matters}.
For a controlled comparison, both methods are applied to the same visual branch of the 12-th double-stream block in Flux~\cite{Flux:2024:Online}, using the same SAM2-B~\cite{ravi2024sam} foundation prior, the same 4K fine-tuning data, and the same optimization hyperparameters for 20K training iterations.
\begin{wraptable}{r}{0.48\textwidth}
  \vspace{-10pt}
  \caption{Ablation study on alignment strategy. 
  }
  \label{table:ablation_repa}
  \centering
  \resizebox{0.45\columnwidth}{!}{
  \begin{tabular}{lcc}
    \toprule
    Method  & Alignment Loss Weight & gFID  $\downarrow$  \\
    \midrule
    Flux~\cite{Flux:2024:Online} & - & 50.57 \\
    Flux-WLF~\cite{zhang2025diffusion} & - & 39.49 \\ 
    \midrule
    \multirow{2}{*}{iREPA~\cite{singh2025matters}} & 0.1 & 55.24  \\
    & 1.0 & 274.81 \\
    \midrule
    SGA (Ours) & 1.0 & \textbf{38.57} \\ 
    \bottomrule
  \end{tabular}
  }
  \vspace{-10pt}
\end{wraptable}
As shown in Table~\ref{table:ablation_repa}, directly applying iREPA to a pre-trained LDM leads to significant generation degradation. 
Even with a conservative weight ($\lambda=0.1$), iREPA underperforms both the Flux-WLF~\cite{zhang2025diffusion} and the vanilla Flux model~\cite{Flux:2024:Online}, indicating that rigid patch-wise alignment disrupts the pre-trained generative manifold. 
This trend becomes more pronounced as the alignment weight increases, resulting in a total collapse at $\lambda=1.0$. 
In contrast, SGA remains compatible with the pre-trained manifold and outperforms both vanilla Flux and Flux-WLF under the same alignment location and training recipe, successfully harnessing the foundation priors where iREPA fails.
This failure mode of direct alignment is consistent with Proposition~\ref{prop:strict}: iREPA has a strictly tighter zero-loss set that forces the projected features off the pre-trained manifold, whereas SGA admits a strictly larger orbit of zero-loss configurations.

\textbf{Ablation on alignment with different layers.}
Furthermore, we investigate the sensitivity of SGA when aligning foundation priors with different intermediate layers of the diffusion model.
\begin{wraptable}{r}{0.46\textwidth}
  \vspace{-10pt}
  \caption{Ablation study on alignment with different intermediate layers of diffusion model. }
  \label{table:ablation_layer}
  \centering
  \resizebox{0.46\columnwidth}{!}{
  \begin{tabular}{lcc}
    \toprule
    Model & Alignment Layer Index  & gFID  $\downarrow$  \\
    \midrule
    Diffusion-4K~\cite{zhang2025diffusion} & - & 39.49 \\
    \midrule
    Flux-SGA-SAM2 & 8 & 39.23 \\
    Flux-SGA-SAM2 & 12 & \textbf{38.57} \\
    Flux-SGA-SAM2 & 19 & 38.88 \\
    \bottomrule
  \end{tabular}
  }
  \vspace{-10pt}
\end{wraptable}
Specifically, we employ SAM2-B~\cite{ravi2024sam} as the target foundation prior, applying our spatial constraint to the visual feature branches of double-stream blocks at varying architectural depths within the Flux model.
As presented in Table~\ref{table:ablation_layer}, consistent performance improvements over the strong Diffusion-4K baseline~\cite{zhang2025diffusion}, identically fine-tuned on the same data for 20K iterations, are observed regardless of the particular layer chosen for alignment. 
Notably, aligning at the middle layer (\emph{i.e.}, the 12-th layer) yields the best generative quality, effectively striking an ideal balance between macroscopic high-level semantics and microscopic local spatial details.
These findings clearly demonstrate the architectural robustness of our spatial structural constraint, proving its versatile applicability across diverse depths within the generative network.

\textbf{Ablation on SGA within VAE and diffusion model.}
To rigorously isolate the contributions of our proposed framework, we evaluate the individual and combined effects of integrating SGA into the VAE and the diffusion network.
Consistent with our previous experiments, SAM2-B~\cite{ravi2024sam} is employed as the default foundation prior.
As detailed in Table~\ref{table:ablation_sga}, independently applying SGA to either
\begin{wraptable}{r}{0.42\textwidth}
  \vspace{-10pt}
  \caption{Ablation study on SGA. }
  \label{table:ablation_sga}
  \centering
  \resizebox{0.4\columnwidth}{!}{
  \begin{tabular}{lcc}
    \toprule
    SGA in VAE & SGA in Diffusion & gFID  $\downarrow$  \\
    \midrule
    - & - & 39.49 \\
    $\checkmark$ & - & 39.35 \\
    - & $\checkmark$ & 38.82 \\
    $\checkmark$ & $\checkmark$ & \textbf{38.57} \\
    \bottomrule
  \end{tabular}
  }
  \vspace{-10pt}
\end{wraptable}
the latent compression or the generative denoising stage yields consistent improvements over the baseline. 
Crucially, their joint integration yields the best performance. 
This synergistic enhancement explicitly validates our unified alignment strategy, suggesting that macroscopic representation learnability and microscopic pixel-level fidelity can be better balanced through joint guidance across the text-to-image synthesis pipeline.

\section{Conclusion}

In this paper, we propose SGA, a novel and non-invasive representation alignment framework, addressing the fundamental \emph{learnability-fidelity conflict} inherent in fine-tuning LDMs for ultra-high-resolution image synthesis. 
By aligning internal spatial self-similarities rather than enforcing absolute feature homogenization, SGA preserves the native generative manifold while injecting representation priors from foundation models.
Extensive experiments validate the versatility of our approach, seamlessly integrating into both the VAE latent compression and the generative denoising stages. 

While this study establishes the efficacy of SGA using the state-of-the-art Flux as a representative case, exploring its generalizability across a broader spectrum of pre-trained LDM manifolds remains a promising direction. 
In the future, we aim to extend SGA towards a unified generation-understanding framework, fostering bidirectional synergy between semantic priors and generative fidelity.

{
\small

\bibliographystyle{unsrt}
\bibliography{ref.bib}

}


\newpage
\appendix

\section{Theoretical Analysis}
\label{appendix:theory}

This section establishes three properties of $\mathcal{L}_{sga}$ that clarify, respectively, what the constraint leaves free in the projected generative features, what it transfers from the foundation prior, and how it relates to REPA-style patch matching~\cite{yu2024representation,singh2025matters,leng2025repa}. Prior works that use Gram-style objectives in adjacent settings~\cite{simeoni2025dinov3, zhang2025videorepa} adopt them largely as empirical heuristics and do not provide a formal account of how such losses act on feature geometry during generative fine-tuning. 
The analysis below provides this account in the spatial $N \times N$ form used by SGA: the gauge-invariance and spectral/subspace-matching properties hold for any Frobenius-Gram relational objective and thus also apply to the prior heuristics as instances; the third property is specific to the comparison with patch-wise REPA and is what motivates our choice of the non-invasive relational form for LDM fine-tuning at 4K. 
Throughout this section, $\tilde{H}_g, \tilde{H}_f \in \mathbb{R}^{N \times C_f}$ denote the row-wise $L_2$-normalized feature matrices defined in Section~\ref{subsec:sga}, and $G_g, G_f \in \mathbb{R}^{N \times N}$ are the spatial Gram matrices of Eq.~\ref{equation:gram_computation}. For clarity, the propositions use the per-sample alignment loss
$$\ell_{sga}(\tilde{H}_g,\tilde{H}_f) := \frac{1}{N^2}\|G_g-G_f\|_F^2,$$
whose expectation over training variables gives $\mathcal{L}_{sga}$. Thus, zero-loss statements below are per-sample statements; zero expected loss means the corresponding condition holds almost surely. Under finite alignment weight $\lambda_s$, training optimizes a combined objective rather than either alignment term alone; therefore, the propositions describe the zero-loss sets and local geometric constraints that shape this objective rather than the precise endpoint of training.

\begin{proposition}[Channel-orthogonal gauge invariance]
\label{prop:gauge}
For any orthogonal matrix $Q \in O(C_f)$,
$$\ell_{sga}(\tilde{H}_g Q,\, \tilde{H}_f) \;=\; \ell_{sga}(\tilde{H}_g,\, \tilde{H}_f),$$
and $\tilde{H}_g Q$ remains row-wise $L_2$-normalized.
\end{proposition}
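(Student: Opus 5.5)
The argument is a direct computation, so I will prove the two claims in turn by expanding definitions.

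\emph{Gram invariance.} Write $\tilde{H}_g' := \tilde{H}_g Q$ and compute its spatial Gram matrix as in Eq.~\ref{equation:gram_computation}:
\[
G_g' = \tilde{H}_g' (\tilde{H}_g')^\top = \tilde{H}_g Q Q^\top \tilde{H}_g^\top = \tilde{H}_g \tilde{H}_g^\top = G_g ,
\]
using $QQ^\top = I_{C_f}$ for $Q \in O(C_f)$. Since $\ell_{sga}$ depends on $\tilde{H}_g$ only through $G_g$, and $G_f$ is untouched, the loss values coincide:
\[
\ell_{sga}(\tilde{H}_g Q,\tilde{H}_f) = \tfrac{1}{N^2}\|G_g' - G_f\|_F^2 = \tfrac{1}{N^2}\|G_g - G_f\|_F^2 = \ell_{sga}(\tilde{H}_g,\tilde{H}_f).
\]

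\emph{Row normalization is preserved.} Let $h_i^\top$ denote the $i$-th row of $\tilde{H}_g$. The corresponding row of $\tilde{H}_g Q$ is $h_i^\top Q$, and
\[
\|h_i^\top Q\|_2^2 = h_i^\top Q Q^\top h_i = \|h_i\|_2^2 = 1 .
\]
Equivalently, the diagonal of $G_g' = G_g$ is all ones, which is exactly the statement that every row has unit norm.

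There is no real obstacle. The only point worth noting is that right-multiplication by $Q$ acts on the channel axis. It therefore commutes with the row-wise normalization: $\mathrm{Norm}(\phi(H_g))Q = \mathrm{Norm}(\phi(H_g)Q)$, since $Q$ preserves row norms. Consequently, composing the projection head $\phi$ with any channel rotation leaves $\ell_{sga}$ unchanged. This is the gauge freedom the proposition is meant to exhibit.
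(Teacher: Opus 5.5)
Your proof is correct and follows the paper's argument essentially verbatim: both compute $(\tilde{H}_g Q)(\tilde{H}_g Q)^\top = \tilde{H}_g Q Q^\top \tilde{H}_g^\top = G_g$, so that $\ell_{sga}$ depends on $\tilde{H}_g$ only through $G_g$, and both use orthogonality of $Q$ to show that each row keeps unit norm. Your closing remark, that channel rotations commute with the row-wise normalization, is a small valid addition (for nonzero rows) that the paper leaves implicit when it says the orbit freedom can be absorbed into the projection head $\phi$.
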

\begin{proof}
Each row satisfies $\|(\tilde{H}_g Q)_{i,:}\|_2 = \|(\tilde{H}_g)_{i,:}\,Q\|_2 = \|(\tilde{H}_g)_{i,:}\|_2 = 1$ since $Q$ is orthogonal, so the normalization constraint is preserved. Moreover, $(\tilde{H}_g Q)(\tilde{H}_g Q)^\top = \tilde{H}_g\, Q Q^\top\, \tilde{H}_g^\top = \tilde{H}_g \tilde{H}_g^\top = G_g$. The Frobenius distance in Eq.~\ref{equation:sga_loss} therefore depends on $\tilde{H}_g$ only through $G_g$.
\end{proof}

Since $\ell_{sga}$ depends on $\tilde{H}_g$ only through $G_g$, its zero-loss set contains an entire $O(C_f)$-orbit through $\tilde{H}_f$ in the projected space (of dimension up to $C_f(C_f-1)/2$, attained when $\tilde{H}_f$ has full column rank). This orbit freedom can be absorbed by the auxiliary projection head $\phi$, so SGA does not force the projected generative features to adopt the absolute channel coordinate system of $H_f$. Thus, SGA is non-invasive with respect to the projected channel basis: it constrains the spatial self-similarity structure of the projected features (the $N \times N$ Gram $G_g$) without imposing point-wise coordinate equality.

\begin{proposition}[Spectral and spatial subspace matching]
\label{prop:spectral}
Let $\sigma_i(\tilde{H}_g)$ denote the singular values of $\tilde{H}_g$ in non-increasing order, padded with zeros to length $N$, and let $U_g, U_f \in \mathbb{R}^{N \times k}$ collect orthonormal bases for the top-$k$ eigenspaces of $G_g, G_f$ respectively, where $1 \leq k < N$. Then
\begin{enumerate}
\item[(i)] $\sigma_i(\tilde{H}_g)^2 = \lambda_i(G_g)$ for all $i$, and the spectral mismatch is bounded by $\sum_i (\lambda_i(G_g) - \lambda_i(G_f))^2 \leq \|G_g - G_f\|_F^2$;
\item[(ii)] if the top-$k$ eigengap $\delta_k := \lambda_k(G_f) - \lambda_{k+1}(G_f) > 0$, then
$$\|\sin\Theta(U_g, U_f)\|_F \;\leq\; \frac{2\,\|G_g - G_f\|_F}{\delta_k},$$
where $\Theta$ denotes the principal angles between the corresponding subspaces.
\end{enumerate}
\end{proposition}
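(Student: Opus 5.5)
For part (i) I would start from the singular value decomposition $\tilde{H}_g = U\Sigma V^\top$ with $U \in \mathbb{R}^{N\times N}$ and $V \in \mathbb{R}^{C_f\times C_f}$ orthogonal. Then $G_g = \tilde{H}_g\tilde{H}_g^\top = U\Sigma\Sigma^\top U^\top$. The diagonal of $\Sigma\Sigma^\top$ is $\sigma_i(\tilde{H}_g)^2$ for $i \le \min(N,C_f)$ and zero otherwise, which matches the zero-padding convention in the statement. This gives $\sigma_i(\tilde{H}_g)^2 = \lambda_i(G_g)$ for all $i$, and the same identity holds for $\tilde{H}_f$ and $G_f$.

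The spectral mismatch bound is then the Hoffman--Wielandt inequality for symmetric matrices. I would state it as a standard result: for real symmetric $A,B$ with eigenvalues sorted non-increasingly, $\sum_i (\lambda_i(A)-\lambda_i(B))^2 \le \|A-B\|_F^2$. If a self-contained argument is wanted, expand $\|A-B\|_F^2 = \|A\|_F^2+\|B\|_F^2-2\,\mathrm{tr}(AB)$ and bound $\mathrm{tr}(AB) \le \sum_i \lambda_i(A)\lambda_i(B)$. That trace bound follows from von Neumann's trace inequality, or equivalently from writing $\mathrm{tr}(AB)$ as a doubly stochastic combination of products of eigenvalues and applying Birkhoff's theorem together with the rearrangement inequality. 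This finishes (i).

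For part (ii) I would invoke the Davis--Kahan $\sin\Theta$ theorem in the Yu--Wang--Samworth variant, which only needs the eigengap of one matrix. For symmetric $\Sigma,\hat\Sigma$ with top-$k$ eigenspaces $V,\hat V$ and $\delta = \lambda_k(\Sigma)-\lambda_{k+1}(\Sigma)>0$, it gives $\|\sin\Theta(\hat V,V)\|_F \le 2\min(\sqrt{k}\|\hat\Sigma-\Sigma\|_{op},\|\hat\Sigma-\Sigma\|_F)/\delta$. Setting $\Sigma=G_f$ and $\hat\Sigma=G_g$ yields exactly the claimed constant $2/\delta_k$.

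The main obstacle is that the classical Davis--Kahan theorem requires a gap measured between the spectrum of $G_f$ and the perturbed spectrum of $G_g$. Keeping that version would require a case split:
\begin{itemize}
\item If $\|G_g-G_f\|_F \ge \delta_k/2$, the bound is trivial, since $\|\sin\Theta\|_F \le \sqrt{k}$ always. I would rather not rely on this case, because it gives a factor $\sqrt{k}$ that does not fit the constant, so instead I would import the one-gap result above.
\item To justify that result, write $P_f = U_fU_f^\top$ and $P_g = U_gU_g^\top$, and note $\|\sin\Theta\|_F^2 = \tfrac12\|P_g-P_f\|_F^2$. The key inequality is $\delta_k\|\sin\Theta\|_F^2 \le 2\langle G_f, P_f-P_g\rangle$. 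Its right-hand side equals $2\langle G_f-G_g,P_f-P_g\rangle + 2\langle G_g,P_f-P_g\rangle$, and the second term is $\le 0$ by Ky Fan's maximum principle because $P_g$ maximizes $\langle G_g,P\rangle$ over rank-$k$ projections. Cauchy--Schwarz then gives $\delta_k\|\sin\Theta\|_F^2 \le 2\|G_g-G_f\|_F\,\sqrt2\,\|\sin\Theta\|_F$. Dividing through would give a $2\sqrt2$ constant, so I would sharpen this step by using $\|P_f-P_g\|_F=\sqrt2\|\sin\Theta\|_F$ together with the one-half factor in the first inequality to recover exactly the constant $2$.
\end{itemize}
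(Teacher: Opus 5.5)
Your proposal is correct and follows the paper's proof. For (i) you use the SVD identity $G_g = U\Sigma\Sigma^\top U^\top$ plus Hoffman--Wielandt, and for (ii) the Yu--Wang--Samworth one-gap variant of Davis--Kahan with $\Sigma = G_f$, $\hat\Sigma = G_g$, which is exactly the result the paper cites.

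One correction to your optional self-contained justification of (ii). As you describe it, the closing \emph{sharpening} step only reuses identities you have already applied, so it cannot remove the extra $\sqrt{2}$. The real fix is that your key inequality holds without the factor $2$: $\langle G_f, P_f - P_g\rangle \ge \delta_k\|\sin\Theta\|_F^2$. To see this, expand in the eigenbasis $u_i$ of $G_f$ with weights $w_i = \|P_g u_i\|^2 \in [0,1]$, and use $\sum_{i\le k}(1-w_i) = \sum_{i>k} w_i = \|\sin\Theta\|_F^2$. Ky Fan and Cauchy--Schwarz then give the constant $\sqrt{2}$, which is stronger than the stated $2$.
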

\begin{proof}
For (i), if $\tilde{H}_g = U \Sigma V^\top$ is the SVD, then $G_g = U \Sigma\Sigma^\top U^\top$, where $\Sigma\Sigma^\top \in \mathbb{R}^{N \times N}$ is diagonal, so the eigenvalues of $G_g$ (sorted in non-increasing order) equal the squared singular values of $\tilde{H}_g$, and the eigenvectors of $G_g$ equal the left singular vectors $U$.
The Hoffman--Wielandt inequality applied to the symmetric (hence normal) matrices $G_g, G_f$ yields the spectral bound under the same-order pairing of eigenvalues. 
For (ii), $G_g, G_f$ are symmetric, so the Davis--Kahan $\sin\Theta$ theorem~\cite{yu2015useful} applies directly to the top-$k$ principal subspace of $G_f$: only the lower gap $\delta_k = \lambda_k(G_f) - \lambda_{k+1}(G_f)$ enters, and the stated bound follows with constant $2$.
\end{proof}

Part (i) constrains the relative importance of principal spatial modes; part (ii) anchors the corresponding spatial subspace whenever the foundation prior exhibits a non-degenerate top-$k$ eigengap. As the Frobenius distance is driven toward zero, $G_g \to G_f$ pins the spectrum exactly and identifies the same invariant spatial subspaces, up to the usual sign and repeated-eigenvalue ambiguities. The remaining zero-loss freedom characterized by Proposition~\ref{prop:gauge} is therefore a channel-space freedom in the projected features, not an unconstrained change in the matched spatial self-similarity structure. When $G_f$ has near-degenerate top-$k$ eigenvalues, $\delta_k \to 0$ and the subspace bound becomes vacuous; the eigenvalues are still pinned by (i), but the corresponding subspace is identifiable only up to within-block rotation. Together, Propositions~\ref{prop:gauge} and~\ref{prop:spectral} partition the constraint cleanly: the spatial self-similarity structure (eigenvalues and identifiable top-$k$ subspaces of $G_f$) is transferred, while the channel basis of the projected features is left free.

\begin{proposition}[Containment of zero-loss sets]
\label{prop:strict}
Let $\ell_{repa}(\tilde{H}_g,\tilde{H}_f) := \frac{1}{N}\|\tilde{H}_g - \tilde{H}_f\|_F^2 = 2 - \frac{2}{N}\,\mathrm{tr}(\tilde{H}_g\tilde{H}_f^\top)$ denote the per-sample patch-wise REPA loss in squared-distance form. This differs from the canonical patch-wise cosine REPA loss only by an additive constant and a positive scale (since $\|a-b\|_2^2 = 2 - 2\langle a,b\rangle$ for unit vectors), so the gradient direction and zero-loss set are identical; we adopt this form for analytical convenience. Then for any row-normalized $\tilde{H}_g, \tilde{H}_f \in \mathbb{R}^{N \times C_f}$,
$$\ell_{sga}(\tilde{H}_g,\tilde{H}_f) \;=\; \tfrac{1}{N^2}\|G_g - G_f\|_F^2 \;\leq\; \tfrac{4}{N}\,\|\tilde{H}_g - \tilde{H}_f\|_F^2 \;=\; 4\,\ell_{repa}(\tilde{H}_g,\tilde{H}_f).$$
Moreover, the zero-loss sets satisfy
$$\{\tilde{H}_g : \ell_{repa}(\tilde{H}_g,\tilde{H}_f) = 0\} \;=\; \{\tilde{H}_f\} \;\subsetneq\; \{\tilde{H}_f Q : Q \in O(C_f)\} \;=\; \{\tilde{H}_g : \ell_{sga}(\tilde{H}_g,\tilde{H}_f) = 0\}.$$
\end{proposition}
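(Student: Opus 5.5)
The plan is to prove the norm inequality by a telescoping (product-rule) decomposition of the Gram difference, and then to characterize the two zero-loss sets separately. Write $A := \tilde{H}_g$ and $B := \tilde{H}_f$, so that $G_g = AA^\top$ and $G_f = BB^\top$. I would first use the identity
$$AA^\top - BB^\top = (A-B)A^\top + B(A-B)^\top.$$
By the triangle inequality and submultiplicativity $\|XY\|_F \le \|X\|_F\|Y\|_2$ (together with the transposed version), this gives
$$\|G_g - G_f\|_F \le \|A-B\|_F\,\big(\|A\|_2 + \|B\|_2\big).$$
Row normalization gives $\|A\|_F^2 = \|B\|_F^2 = N$, so each spectral norm is at most $\sqrt{N}$. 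Hence $\|G_g-G_f\|_F^2 \le 4N\|A-B\|_F^2$, and dividing by $N^2$ yields $\ell_{sga} \le \tfrac{4}{N}\|A-B\|_F^2 = 4\,\ell_{repa}$. The closed form $\ell_{repa} = 2 - \tfrac{2}{N}\mathrm{tr}(AB^\top)$ is just the expansion of $\|A-B\|_F^2$ using unit rows.

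For the zero-loss sets, the REPA side is immediate: $\ell_{repa}=0$ holds if and only if $\|A-B\|_F = 0$, that is, $A = B$. On the SGA side, the inclusion $\{BQ : Q \in O(C_f)\} \subseteq \{\ell_{sga}=0\}$ follows directly from Proposition~\ref{prop:gauge}, which also guarantees that every $BQ$ is row-normalized and hence admissible.

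The reverse inclusion is the main step. One must show that $AA^\top = BB^\top$ forces $A = BQ$ for some $Q \in O(C_f)$. I would argue via rows: let $a_i, b_i \in \mathbb{R}^{C_f}$ denote the $i$-th rows. Define a map $T$ on $\mathrm{span}\{b_i\}$ by $T(\sum_i c_i b_i) := \sum_i c_i a_i$. Equality of Gram matrices gives
$$\Big\|\sum_i c_i a_i\Big\|^2 = c^\top AA^\top c = c^\top BB^\top c = \Big\|\sum_i c_i b_i\Big\|^2.$$
Two consequences follow. First, $T$ is well defined, since any linear relation among the $b_i$ carries over to the $a_i$. Second, $T$ is a linear isometry from the row space of $B$ onto the row space of $A$. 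Because these two subspaces have equal dimension ($\mathrm{rank}\,A = \mathrm{rank}\,AA^\top = \mathrm{rank}\,B$), their orthogonal complements in $\mathbb{R}^{C_f}$ also have equal dimension. So $T$ extends, via any isometry between the complements, to an orthogonal $Q$ with $a_i = b_i Q$ for every $i$, i.e.\ $A = BQ$. An alternative route is to compare the SVDs $A = U\Sigma V_A^\top$ and $B = U\Sigma V_B^\top$, which share $U$ and $\Sigma$ up to the usual ambiguities, and set $Q = V_B V_A^\top$. The row-space argument avoids having to handle repeated singular values, so I would use it.

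Finally, strictness. The choice $Q=-I$ gives $-B \ne B$, because every row of $B$ has unit norm and is therefore nonzero. So the orbit contains a point outside $\{B\}$, and the containment is proper, which completes the proof.
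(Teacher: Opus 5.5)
Your proof is correct. The inequality part is essentially the paper's argument. The paper uses the mirrored splitting $G_g-G_f=\tilde{H}_g(\tilde{H}_g-\tilde{H}_f)^\top+(\tilde{H}_g-\tilde{H}_f)\tilde{H}_f^\top$ with plain Frobenius submultiplicativity and $\|\tilde{H}_g\|_F=\|\tilde{H}_f\|_F=\sqrt{N}$. Your mixed bound $\|XY\|_F\le\|X\|_F\|Y\|_2$ reaches the same constant, but it passes through the sharper estimate $\|G_g-G_f\|_F\le(\|A\|_2+\|B\|_2)\|A-B\|_F$, whose prefactor is strictly below $2\sqrt{N}$ unless both matrices have rank one.

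The genuine difference is in the reverse inclusion $G_g=G_f\Rightarrow\tilde{H}_g=\tilde{H}_fQ$. The paper proceeds as follows:
\begin{itemize}
\item it takes compact SVDs $\tilde{H}_f=U_r\Sigma_rV_f^\top$ and $\tilde{H}_g=U_r\Sigma_rV_g^\top$ with a shared $U_r,\Sigma_r$;
\item it completes $V_f,V_g$ to orthonormal bases;
\item it writes $Q=V_fV_g^\top+V_f^\perp(V_g^\perp)^\top$ explicitly.
\end{itemize}
You instead obtain the isometry $b_i\mapsto a_i$ on $\mathrm{row}(B)$ directly from $c^\top AA^\top c=c^\top BB^\top c$, which gives well-definedness and norm preservation at once. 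You then extend it by any isometry between the orthogonal complements.

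Your argument is coordinate-free and self-contained, whereas the paper only asserts that the two SVDs can share $U_r$ and $\Sigma_r$. That assertion is true, and repeated singular values are harmless there too. Fix an eigendecomposition $U_r\Sigma_r^2U_r^\top$ of the common Gram matrix and set $V:=H^\top U_r\Sigma_r^{-1}$ for each factor $H$; then $V^\top V=I$ and $U_r\Sigma_rV^\top=U_rU_r^\top H=H$. So your stated reason for avoiding the SVD route is weaker than it sounds.

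In exchange, the paper's construction gives an explicit formula for $Q$. It also exhibits the non-uniqueness as the stabilizer acting on $\mathrm{row}(\tilde{H}_f)^\perp$, which the paper uses in its remark on the orbit. In your proof the same freedom is the arbitrary choice of isometry between the complements. The REPA zero set and the strictness step via $Q=-I$ coincide with the paper's.
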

\begin{proof}
\textbf{Bound.} Decompose $G_g - G_f = \tilde{H}_g(\tilde{H}_g - \tilde{H}_f)^\top + (\tilde{H}_g - \tilde{H}_f)\tilde{H}_f^\top$. Apply the triangle inequality and the submultiplicative bound $\|AB\|_F \leq \|A\|_F \|B\|_F$, using $\|\tilde{H}_g\|_F = \|\tilde{H}_f\|_F = \sqrt{N}$ from row-wise normalization:
$$\|G_g - G_f\|_F \;\leq\; (\|\tilde{H}_g\|_F + \|\tilde{H}_f\|_F)\,\|\tilde{H}_g - \tilde{H}_f\|_F \;=\; 2\sqrt{N}\,\|\tilde{H}_g - \tilde{H}_f\|_F.$$
Squaring and dividing by $N^2$ gives the stated inequality.

\textbf{Orbit characterization.} The forward inclusion follows from Proposition~\ref{prop:gauge}: $(\tilde{H}_f Q)(\tilde{H}_f Q)^\top = \tilde{H}_f \tilde{H}_f^\top = G_f$ for any $Q \in O(C_f)$. For the reverse, suppose $G_g = G_f$. Let $r=\mathrm{rank}(G_f)$ and choose compact SVDs
$$\tilde{H}_f = U_r \Sigma_r V_f^\top, \qquad \tilde{H}_g = U_r \Sigma_r V_g^\top,$$
where the same $U_r$ and $\Sigma_r$ can be chosen because $\tilde{H}_g\tilde{H}_g^\top=\tilde{H}_f\tilde{H}_f^\top$. Complete $V_f,V_g\in\mathbb{R}^{C_f\times r}$ to orthonormal bases $[V_f,V_f^\perp]$ and $[V_g,V_g^\perp]$ of $\mathbb{R}^{C_f}$, and define
$$Q = V_f V_g^\top + V_f^\perp (V_g^\perp)^\top \in O(C_f).$$
Then $\tilde{H}_f Q = U_r\Sigma_r V_g^\top = \tilde{H}_g$. If $r<C_f$, $Q$ is not unique because rotations on the orthogonal complement of $\mathrm{row}(\tilde{H}_f)$ do not affect $\tilde{H}_fQ$; the distinct zero-loss configurations form the orbit, which corresponds to the group $O(C_f)$ modulo this stabilizer. 
Strictness is immediate because row-normalization implies $\tilde{H}_f\neq 0$, and choosing $Q=-I$ gives $\tilde{H}_fQ\neq\tilde{H}_f$ with zero SGA loss, whereas the REPA zero-loss set is the singleton $\{\tilde{H}_f\}$.
\end{proof}

The constraint imposed by SGA is therefore strictly weaker than REPA in two complementary ways: pointwise, $\ell_{sga} \leq 4 \, \ell_{repa}$ for every input pair, and in the zero-loss set, where SGA admits the entire $O(C_f)$-orbit through $\tilde{H}_f$ while REPA admits only the singleton $\{\tilde{H}_f\}$. The bound is one-directional by design and worst-case: any $\tilde{H}_g$ in the orbit $\tilde{H}_f O(C_f) \setminus \{\tilde{H}_f\}$ attains zero SGA loss while $\ell_{repa}$ can be as large as $4$ (achieved at antipodal orbit points such as $-\tilde{H}_f$). This asymmetry is precisely the additional projected-feature freedom that SGA grants and REPA forbids.

Geometrically, if $\tilde{H}_g^\ast$ denotes the projected, normalized feature induced by the pre-trained LDM, the minimum projected-feature displacement to a zero-loss configuration is $\|\tilde{H}_g^\ast - \tilde{H}_f\|_F$ for REPA but only $\min_{Q \in O(C_f)} \|\tilde{H}_g^\ast - \tilde{H}_f Q\|_F$ for SGA, the latter bounded above by the former and generically strictly smaller. This is a projected-feature loss-landscape statement, not a proof of a particular training trajectory; it is nevertheless consistent with the gFID gap between iREPA and SGA at the same alignment location and weight in Table~\ref{table:ablation_repa}. It formalizes our use of non-invasive: SGA transfers relational spatial topology without requiring the projected generative features to adopt the foundation model's absolute channel coordinates.

\section{More Details}

In this section, we provide further implementation details regarding our SGA framework. 
In practice, a convolutional layer, followed conditionally by an adaptive average pooling operation when spatial downsampling is required, is utilized as the projection head $\phi(\cdot)$ to map the generative features into the shared feature space.

For the VAE module, operating on an input image resolution of $1024 \times 1024$, the resulting highly compressed latent feature maps (with a spatial compression factor of $F=16$) inherently possess a $64 \times 64$ spatial resolution. 
When utilizing DINOv2-B/14~\cite{oquab2023dinov2} or DINOv3-B/16~\cite{simeoni2025dinov3} priors, these latents naturally align with the foundation features without requiring spatial pooling. 
Conversely, when employing SAM2-B/32~\cite{ravi2024sam}, we explicitly downsample the generative latents to $32 \times 32$ via adaptive average pooling. 
To accurately yield these target foundation feature dimensions without introducing destructive interpolation artifacts, the input image resolutions fed into the respective foundation models are dynamically resized according to their inherent patch sizes—namely, $896 \times 896$ for DINOv2-B/14, and $1024 \times 1024$ for both DINOv3-B/16 and SAM2-B/32.

Regarding the generative diffusion networks for 4K training, a stride of 2 is applied within the projection head $\phi(\cdot)$. 
Unlike the VAE stage, the diffusion models employ bucket training to accommodate diverse aspect ratios at extreme resolutions. 
Consequently, spatial alignments are dynamically scaled based on long-edge dimensions rather than fixed square grids. 
Utilizing the aforementioned adaptive average pooling, the intermediate hidden states are uniformly synchronized to a long-edge resolution of 32 across the DINOv2, DINOv3, and SAM2 priors, with the foundation models' input resolutions proportionally adapted according to their inherent patch sizes.

\section{More Results}
\label{appendix:more_results}

\begin{table}[h]
  \caption{Ablation study on SGA within VAE fine-tuning. }
  \label{table:ablation_vae}
  \centering
  \resizebox{0.8\columnwidth}{!}{
  \begin{tabular}{lccccc}
    \toprule
    Model & rFID $\downarrow$ &  NMSE $\downarrow$ & PSNR $\uparrow$ & SSIM $\uparrow$ & LPIPS $\downarrow$ \\
    \midrule
    Flux-VAE-F16-FT~\cite{rombach2022high} &  0.17 & 0.10 & 28.36 & 0.80 & 0.09   \\
    Flux-VAE-F16-SGA-SAM2 (Ours)  & 0.06 & 0.06 & 32.01 & 0.85 & 0.07   \\ 
    Flux-VAE-F16-SGA-DINOv2 (Ours) & 0.08 & 0.07 & 31.58 & 0.84 & 0.08 \\ 
    Flux-VAE-F16-SGA-DINOv3 (Ours) & 0.08 & 0.06 & 32.19 & 0.86 & 0.09 \\ 
    \bottomrule
  \end{tabular}
  }
\end{table}

In this section, we provide quantitative results comparing VAE fine-tuning with SGA against the vanilla fine-tuning baseline~\cite{rombach2022high}, which relies on standard pixel-wise and perceptual losses. 
As depicted in Table~\ref{table:ablation_vae}, the experimental results clearly demonstrate the effectiveness of fine-tuning the VAE with our proposed SGA.

\begin{table}[h]
  \caption{Impact of SGA alignment resolution on 4K synthesis performance. }
  \label{table:ablation_resolutions}
  \centering
  \resizebox{\columnwidth}{!}{
  \begin{tabular}{lcccccc}
    \toprule
    Model & Evaluation Dataset & gFID $\downarrow$ & CLIP Score $\uparrow$ & Aesthetics $\uparrow$ & GLCM Score $\uparrow$ & Compression Ratio $\downarrow$ \\
    \midrule
    Flux-SGA-DINOv2@32 &  \multirow{2}{*}{Aesthetic-Eval@4K} & 146.41 & 33.60 & 6.15 & 0.47 & 15.56 \\
    Flux-SGA-DINOv2@64 &   & 145.96 & 33.92 & 6.23 & 0.56 & 14.78 \\
    \bottomrule
  \end{tabular}
  }
\end{table}

Additionally, we investigate further optimizations tailored to maximize the generative potential of 4K image synthesis. 
As reported in Table~\ref{table:ablation_resolutions}, elevating the intermediate hidden states to a long-edge resolution of 64 with the DINOv2 prior comprehensively boosts the 4K synthesis quality across all evaluated metrics. 
This performance leap explicitly demonstrates that extreme-scale 4K synthesis fundamentally relies on higher-resolution spatial alignments within the SGA module.
By providing a denser spatial prior, our SGA acts as a dedicated 4K optimization, seamlessly unlocking the intrinsic informational capacity and fine-grained structural fidelity that ultra-high-resolution imagery demands.

Furthermore, we present additional qualitative 4K visualizations in Figure~\ref{fig:qualitative_results} to demonstrate the generative efficacy of our framework. 
These supplementary examples showcase its capacity to consistently synthesize extreme-resolution imagery, effectively balancing macroscopic structural coherence with fine-grained, high-frequency microscopic details.

\begin{figure}
  \centering
  \includegraphics[width=\linewidth]{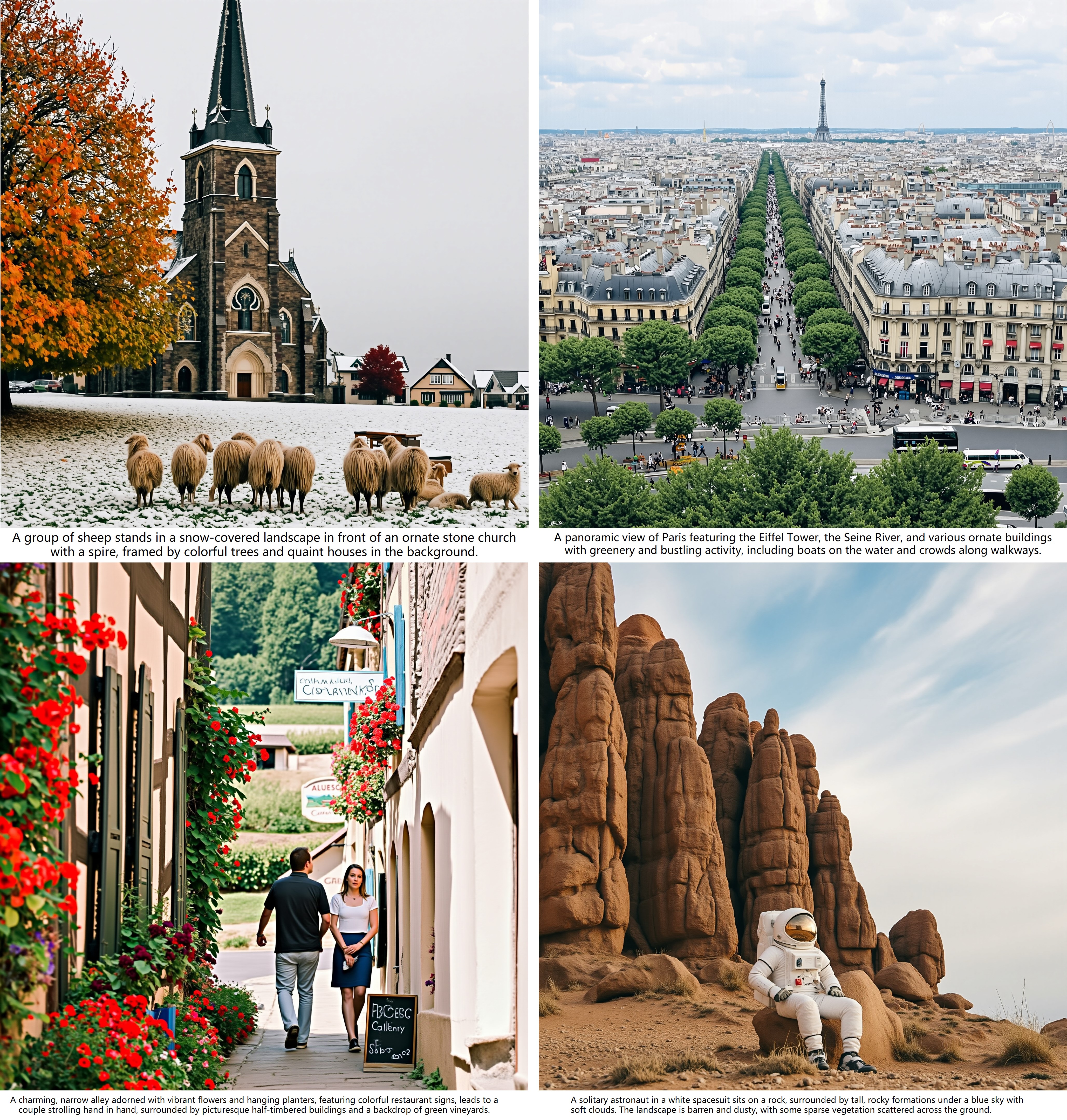}
  \caption{\textbf{Supplementary 4K Visualizations.} Additional qualitative examples generated by our SGA framework. Note the rigorous preservation of logical global topologies alongside photorealistic, crisp textures at extreme resolutions.}
  \label{fig:qualitative_results}
\end{figure}

\begin{table}
  \caption{Comparison of average time cost for the alignment loss computation. }
  \label{table:time_cost}
  \centering
  \resizebox{0.35\columnwidth}{!}{
  \begin{tabular}{lc}
    \toprule
    Method & Average Time Cost \\
    \midrule
    iREPA~\cite{singh2025matters} &  0.008 s   \\
    SGA (Ours)  & 0.006 s  \\ 
    \bottomrule
  \end{tabular}
  }
\end{table}

We compare the computational efficiency of different alignment approaches in Table~\ref{table:time_cost}. 
Benchmarks are conducted on a single NVIDIA H100 GPU with a batch size of 4 at identical resolutions. 
To isolate the specific alignment overhead, we report the average time cost for the forward pass of the alignment loss computation, excluding the duration of feature extraction. 
The results indicate that our SGA achieves superior alignment quality while maintaining a computational efficiency comparable to current patch-matching heuristics like iREPA~\cite{singh2025matters}. 
Specifically, since the alignment is performed on latent-space feature maps, the spatial dimension $N$ remains relatively modest (e.g., $32 \times 32$ or $64 \times 64$), ensuring that the $N \times N$ Gram matrix computation is highly efficient and effectively parallelized via standard GEMM kernels on modern GPUs. 
In contrast, while iREPA involves matrix operations of a smaller spatial scale, it necessitates additional per-patch normalization steps, which introduce a comparable computational overhead. 
Crucially, the time cost for both alignment methods represents a negligible fraction of the total training iteration time, ensuring that SGA provides significant generative gains with virtually no impact on overall 4K training throughput.

\section{Broader Impacts and Safeguards}

The capability to directly synthesize 4K photorealistic imagery accelerates workflows across creative industries, democratizing advanced content creation. However, we acknowledge the inherent dual-use nature of generative modeling. 
The superior structural and pixel-level fidelity of our model could be misappropriated to generate hyper-realistic misinformation, deepfakes, or unauthorized materials. 
Furthermore, we recognize the risk of systemic biases inherited from pre-training datasets, highlighting the critical need for fair demographic representation.

To responsibly mitigate these ethical and societal risks, we will adhere to a strict safeguard protocol: the pre-trained weights and inference code of our 4K framework will be released under a restricted open-access license. 
This explicitly prohibits the generation of non-consensual deepfakes, deceptive political content, and illicit materials.



\end{document}